\newcommand{\argmin}[1]{\underset{#1}{\operatorname{argmin}}\;}
\newcommand{\mR}{\mathbb{R}}
\newcommand{\mE}{\mathbb{E}}
\begin{document}

\title{Large Margin Multiclass Gaussian Classification with Differential Privacy}
\author{Manas A. Pathak and Bhiksha Raj}

\institute{
  Carnegie Mellon University \\ 
  \texttt{\{manasp, bhiksha\}@cs.cmu.edu}
}

\maketitle

\begin{abstract}
As increasing amounts of sensitive personal information is aggregated
into data repositories, it has become important to develop mechanisms
for processing the data without revealing information about individual
data instances. The differential privacy model provides a framework
for the development and theoretical analysis of such mechanisms. In this
paper, we propose an algorithm for learning a discriminatively trained
multi-class Gaussian classifier that satisfies differential privacy
using a large margin loss function with a perturbed regularization
term. We present a theoretical upper bound on the excess risk of the
classifier introduced by the perturbation.

\end{abstract}

\section{Introduction}
In recent years, vast amounts of personal data is being aggregated in
the form of medical, financial records, social networks, and
government census data. As these often contain sensitive information,
a database curator interested in releasing a function such as a
statistic evaluated over the data is faced with the prospect that it
may lead to a breach of privacy of the individuals who contributed to
the database.
It is therefore important to develop techniques for
retrieving desired information from a dataset without revealing any
information about individual data instances. \emph{Differential
privacy}~\cite{Dwork06} is a theoretical model proposed to
address this issue. A query mechanism evaluated over a dataset is
said to satisfy differential privacy if it is likely to produce the
same output on a dataset differing by at most one element. This
implies that an adversary having complete knowledge of all data
instances but one along with {\em a priori} information about the
remaining instance, is not likely to be able to infer any more
information about the remaining instance by observing the output of the
mechanism.

One of the most common applications for such large data sets such as the ones
mentioned above is for training classifiers that can be used to categorize
new data. If the training data contains private data instances, an
adversary should not be able to learn anything about the individual
training dataset instances by analyzing the output of the
classifier. Recently, mechanisms for learning differentially private
classifiers have been proposed for logistic
regression~\cite{ChaudhuriM08}. In this method, the objective function
which is minimized by the classification algorithm is modified by
adding a linear perturbation term. Compared to the original
classifier, there is an additional error introduced by the
perturbation term in the differentially private classifier. 
It is important to have an upper bound on this error as a cost of
preserving privacy.

The work mentioned above is largely restricted to binary
classification, while multi-class classifiers are more useful in many
practical situations. In this paper, we propose an algorithm for
learning multi-class Gaussian classifiers which satisfies differential
privacy. Gaussian classifiers that model the distributions of
individual classes as being generated from Gaussian distribution or a
mixture of Gaussian distributions~\cite{mclachlan2000} are 
commonly used as multi-class classifiers. We use a large margin
discriminative algorithm for training the classifier introduced by Sha
and Saul~\cite{ShaS06}. To ensure that the learned multi-class
classifier preserves differential privacy, we modify the objective
function by introducing a perturbed regularization term.

\section{Differential Privacy}
In recent years, the differential privacy model proposed by Dwork,
\emph{et al.}~\cite{Dwork06} has emerged as a robust standard for data
privacy. It originated from the statistical database model, where the
dataset $D$ is a collection of elements and a randomized \emph{query
  mechanism} $M$ produces a response when performed on a given
dataset. Two datasets $D$ and $D'$ differing by at most one element
are said to be \emph{adjacent}. There are two proposed
definitions for adjacent datasets one based on symmetric difference --
$D'$ containing of one entry less than $D$, and one
based on substitution -- one entry of $D'$ differs in
value from $D$. We use the substitution definition of adjacency
previously used by \cite{KasiviswanathanLNRS08,ChaudhuriM08}, where
the one entry of the dataset $D =\{x_1,\ldots,x_{n-1},x_n\}$ is
modified to result in an adjacent dataset
$D'=\{x_1,\ldots,x_{n-1},x_n'\}$.  The query mechanism
$M$ is said to satisfy differential privacy if the probability of $M$
resulting in a solution $S$ when performed on a dataset $D$ is very
close to the probability of $M$ resulting in the same solution $S$
when executed on an adjacent dataset $D'$. Assuming the query
mechanism to be a function $M : D \mapsto \text{range}(M)$ with a probability
function $P$ defined over the space of $M$, differential privacy is
formally defined as follows.
\begin{definition}
A randomized function $M$ satisfies $\epsilon$-differential privacy if
for all adjacent datasets $D$ and $D'$ and for any $S \in \text{range}(M)$,
\[ \left|\log\frac{P\left(M(D) = S\right)}{P\left(M(D') = S\right)}\right|
\le \epsilon. \]
\end{definition}
The value of the $\epsilon$ parameter, which is referred to as
\emph{leakage}, determines the degree of privacy. As there is always a
trade-off between privacy and utility, the choice of $\epsilon$ is
motivated by the requirements of the application.

In a machine learning setting, the query mechanism can be thought of
as an algorithm learning the classification, regression or density
estimation rule which is evaluated over the training dataset. The
output of an algorithm satisfying differential privacy is likely to be
same when the value of any single dataset instance is modified, and
therefore, no additional information can be obtained about any
individual training data instances with certainty by observing the
output of the learning algorithm, beyond what is already known to an
adversary.  Differential privacy is a strong definition of privacy --
it provides \emph{ad omnia} guarantee as opposed to most other models
that provide \emph{ad hoc} guarantees against specific set of attacks
and adversarial behaviors.


\subsection{Related Work}

The earlier work on differential privacy was related to
functional approximations for simple data mining tasks and data
release mechanisms~\cite{DinNis03,DwoNis04,BlumDMN05,Barak07}.  Although
many of these works have connection to machine learning problems, more
recently the design and analysis of machine learning algorithms
satisfying differential privacy has been actively
studied. Kasiviswanathan, \emph{et al.}~\cite{KasiviswanathanLNRS08}
present a framework for converting a general agnostic PAC
learning algorithm to an algorithm that satisfies privacy constraints.
Chaudhuri and Monteleoni \cite{ChaudhuriM08} use the exponential
mechanism~\cite{DworkMNS06} to create a differentially private logistic
regression classifier by adding Laplace noise to the estimated
parameters. They propose another differentially
private formulation which involves modifying the objective function of
the logistic regression classifier by adding a linear term scaled by
Laplace noise. The second formulation is advantageous because it
is independent of the classifier sensitivity which
difficult to compute in general and it can be shown that using a
perturbed objective function introduces a lower error as compared to
the exponential mechanism.

However, the above mentioned differentially private classification
algorithms only address the problem of binary classification. Although
it is possible to extend binary classification algorithms to
multi-class using techniques like one-vs-all, it is much more
expensive to do so as compared to a naturally multi-class
classification algorithm. Jagannathan, \emph{et
  al.}~\cite{JagannathanPW09} present a differentially private random
decision tree learning algorithm which can be applied to multi-class
classification. Their approach involves perturbing leaf nodes using
the sensitivity method, and they do not provide theoretical analysis
of excess risk of the perturbed classifier.
In this paper, we propose a modification to the naturally multi-class
large margin Gaussian classification algorithm~\cite{ShaS06,ShaS06b}.

\section{Large Margin Gaussian Classifiers}
We investigate the large margin multi-class classification algorithm
introduced by Sha and Saul~\cite{ShaS06}. The training dataset
$(\vec{x},\vec{y})$\footnote{Notation: vectors and matrices are
  denoted by \textbf{boldface}.} contains $n$ iid $d$-dimensional training data
instances $\vec{x}_i \in \mR^d$ each with labels $y_i \in
\{1,\ldots,C\}$. We consider the setting where each class is modeled
as a single Gaussian ellipsoid. Each class ellipsoid is parametrized
by the centroid $\vec{\mu}_c \in \mR^d$, the inverse covariance matrix
$\mathbf{\Psi}_c \in \mR^{d \times d }$, and a scalar offset $\theta_c
\ge 0$. The decision rule is to assign an instance $\vec{x}_i$ to the
class having smallest Mahalanobis distance~\cite{Mahalanobis} with the
scalar offset from $\vec{x}_i$ to the centroid of that class.
\begin{align}
 y_i = \argmin{c} (\vec{x}_i - \vec{\mu}_c)^T \mathbf{\Psi}_c (\vec{x}_i - \vec{\mu}_c) + \theta_c. 
\end{align}
To simplify the notation, we expand $(\vec{x}_i - \vec{\mu}_c)^T \mathbf{\Psi}_c (\vec{x}_i
- \vec{\mu}_c)$ and collect the parameters for each class as the
following $(d+1)\times(d+1)$ \emph{positive semidefinite} matrix
\begin{align}
  \mathbf{\Phi}_c = \begin{bmatrix}
    \mathbf{\Psi}_c ~~&~~ -\mathbf{\Psi}_c\vec{\mu}_c \\
    -\vec{\mu}_c^T\mathbf{\Psi}_c ~~&~~ \vec{\mu}_c^T\mathbf{\Psi}_c\vec{\mu}_c + \theta_c
  \end{bmatrix}
\end{align}
and also append a unit element to each $d$-dimensional vector
$\vec{x}_i$. The decision rule for a data instance $\vec{x}_i$
simplifies to
\begin{align}
  y_i = \argmin{c} \vec{x}_i^T \mathbf{\Phi}_c \vec{x}_i.
\end{align}
The discriminative training procedure involves estimating a set of
positive semidefinite matrices
$\{\mathbf{\Phi}_1,\ldots,\mathbf{\Phi}_C\}$ from the training data
$\{(\vec{x}_1,y_1),\ldots,(\vec{x}_n,y_n)\}$ which optimize the
performance on the decision rule mentioned above. We apply the large
margin intuition that the optimal classifier must maximize the
distance of training data instances from the decision boundaries. This
leads to the classification algorithm being robust to outliers with
provably strong generalization guarantees. Formally, we require that
for each training data instance $\vec{x}_i$ with label $y_i$, the
distance from $\vec{x}_i$ to the centroid of class $y_i$ is at least
less than its distance from centroids of all other classes by one.
\[ \forall c \ne y_i: \vec{x}_i^T \mathbf{\Phi}_c \vec{x}_i 
 ~\ge~ 1 + \vec{x}_i^T  \mathbf{\Phi}_{y_i} \vec{x}_i. \] 
Analogous to support vector machines, the training algorithm is an
optimization problem minimizing the \emph{hinge loss} denoted by
$[f]_+ = \max(0,f)$, with a linear penalty for incorrect
classification. We use the sum of traces of inverse covariance
matrices for each classes as a \emph{regularization} term. The
regularization requires that if we can learn a classifier which labels
every training data instance correctly, we choose the one with the
lowest inverse covariance or highest covariance for each class
ellipsoid as this prevents the classifier from over-fitting. The
parameter $\lambda$ controls the trade off between the loss function
and the regularization.
\begin{align}
  J(\mathbf{\Phi},\vec{x},\vec{y}) 
  &= \sum_i \sum_{c \ne y_i} \left[1 + \vec{x}_i^T(\mathbf{\Phi}_{y_i} 
    - \mathbf{\Phi}_{c})\vec{x}_i\right]_+ + \lambda \sum_c \text{trace}(\mathbf{\Psi}_c).
\end{align}
The inverse covariance matrix $\mathbf{\Psi}_c$ is contained in the
upper left size $d \times d$ block of the matrix $\mathbf{\Phi}_c$. We
replace it with $\mathbf{I_\Phi}\mathbf{\Phi}_c\mathbf{I_\Phi}$, where
$\mathbf{I_\Phi}$ is the truncated size $(d+1)\times(d+1)$ identity matrix
with the last diagonal element $I_{\Phi_{d+1,d+1}}$ set to zero. The
optimization problem becomes
\begin{align}
  J(\mathbf{\Phi},\vec{x},\vec{y}) 
  &= \sum_i \sum_{c \ne y_i} \left[1 + \vec{x}_i^T(\mathbf{\Phi}_{y_i} 
    - \mathbf{\Phi}_{c})\vec{x}_i\right]_+
  + \lambda \sum_c \text{trace}(\mathbf{\mathbf{I_\Phi}\Phi}_c\mathbf{I_\Phi}) \nonumber\\
  &= L(\mathbf{\Phi},\vec{x},\vec{y}) + N(\mathbf{\Phi}).
\end{align}

The hinge loss being non-differentiable is not very convenient for our
analysis; we replace it with a surrogate loss function called Huber loss
$l_h$~\cite{Chapelle07} which has similar characteristics to the hinge
loss for small values of $h$.
\begin{align}\label{eqn:huber-loss}
  \ell_h(\mathbf{\Phi}_c,x_i,y_i) = \begin{cases}
    0 & \text{if}~ \vec{x}_i^T(\mathbf{\Phi}_{c} - \mathbf{\Phi}_{y_i})\vec{x}_i > h, \\
    \frac{1}{4h}\left[h- \vec{x}_i^T(\mathbf{\Phi}_{y_i} -
      \mathbf{\Phi}_c)\vec{x}_i\right]^2 & \text{if}~ |\vec{x}_i^T(\mathbf{\Phi}_c - \mathbf{\Phi}_c)\vec{x}_i|\le h \\
    -\vec{x}_i^T(\mathbf{\Phi}_{y_i} - \mathbf{\Phi}_c)\vec{x}_i
    & \text{if}~ \vec{x}_i^T(\mathbf{\Phi}_c - \mathbf{\Phi}_{y_i})\vec{x}_i < -h. 
  \end{cases} 
\end{align}
The objective function is convex function of positive semidefinite matrices
$\mathbf{\Phi}_c$. The optimization can be formulated as a
semidefinite programming problem~\cite{VB96} and be solved efficiently
using interior point methods.

The large margin classification framework can be easily extended to
modeling each class with a mixture of Gaussians. Similar to support
vector machines, when training with non-separable data, we can
introduce slack parameters to permit margin violations. These
extensions do not change the basic characteristics of the learning
algorithm. The optimization problem remains to be a convex
semidefinite program with piecewise linear terms and is equally
tractable. For simplicity, we restrict our discussion to single
Gaussians and hard margins in this paper. As we shall see, it is easy
to extend our proposed modifications to these cases.

\section{Differentially Private Large Margin Gaussian Classifiers}
\label{sec:proposed}

We modify the large margin Gaussian classification formulation to
satisfy differential privacy by introducing a perturbation term in the
objective function. As we will see in Section
\ref{sec:error-bound}, this modification leads to a classifier that
preserves differential privacy.

We generate the size $(d+1) \times (d+1)$ perturbation matrix $\mathbf{b}$ 
with density
\begin{align}\label{eqn:b-distribution}
  P(\mathbf{b}) \propto \exp\left(-\frac{\epsilon}{2} \|\mathbf{b}\| \right),
\end{align}
where $\|\cdot\|$ is the Frobenius norm (element-wise $\ell_2$
norm) and $\epsilon$ is the privacy parameter. One method of generating
such a $\mathbf{b}$ matrix is to sample the norm $\|\mathbf{b}\|$ from 
$\Gamma\left((d+1)^2,\frac{2}{\epsilon}\right)$ and the direction of
$\mathbf{b}$ at random.

Our proposed learning algorithm minimizes the following objective
function $J_p(\mathbf{\Phi},\vec{x},\vec{y})$, where the subscript $p$
denotes privacy.
\begin{align} \label{eqn:objective}
  &J_p(\mathbf{\Phi},\vec{x},\vec{y})
  = L(\mathbf{\Phi},\vec{x},\vec{y})
  + \lambda \sum_c \text{trace}(\mathbf{I_\Phi}\mathbf{\Phi}_c\mathbf{I_\Phi})
  + \sum_c \sum_{ij} b_{ij} \Phi_{cij} \nonumber\\
  &=  J(\mathbf{\Phi},\vec{x},\vec{y}) 
  + \sum_c \sum_{ij} b_{ij} \Phi_{cij}.
\end{align}
As the dimensionality of the perturbation matrix $\mathbf{b}$ is same
as that of the classifier parameters $\mathbf{\Phi}_c$, the parameter
space of $\mathbf{\Phi}$ does not change after perturbation. In other
words, given two datasets $(\vec{x},\vec{y})$ and
$(\vec{x}',\vec{y}')$, if $\mathbf{\Phi^p}$ minimizes
$J_p(\mathbf{\Phi},\vec{x},\vec{y})$, it is always possible to have 
$\mathbf{\Phi^p}$ minimize
$J_p(\mathbf{\Phi},\vec{x}',\vec{y}')$. This is a necessary condition
for the classifier $\mathbf{\Phi^p}$ satisfying differential privacy.

Furthermore, as the perturbation term is convex and positive semidefinite, the
perturbed objective function $J_p(\mathbf{\Phi},\vec{x},\vec{y})$ has
the same properties as the unperturbed objective function
$J(\mathbf{\Phi},\vec{x},\vec{y})$. Also, the perturbation does not
introduce any additional computational cost as compared to the
original algorithm.

\section{Theoretical Analysis}

\subsection{Proof of Differential Privacy}
In the following theorem, we prove that the classifier minimizing the
perturbed optimization function $J_p(\mathbf{\Phi},\vec{x},\vec{y})$
satisfies $\epsilon$-differential privacy.  Given the dataset
$(\vec{x},\vec{y})=\{(\vec{x}_1,y_1),\ldots,(\vec{x}_{n-1},y_{n-1}),(\vec{x}_n,y_n)\}$,
the probability of learning the classifier $\mathbf{\Phi}^p$ is close
to the the probability of learning the same classifier
$\mathbf{\Phi}^p$ given its adjacent dataset
$(\vec{x}',\vec{y}')=\{(\vec{x}_1,y_1),\ldots,(\vec{x}_{n-1},y_{n-1}),(\vec{x}_n',y_n')\}$
differing wlog on the $n^{\rm th}$ instance. As we mentioned in the previous section,
it is always possible to find such a classifier $\mathbf{\Phi}^p$ 
minimizing both $J_p(\mathbf{\Phi},\vec{x},\vec{y})$ and 
$J_p(\mathbf{\Phi},\vec{x}',\vec{y}')$ due to the perturbation matrix 
being in the same space as the optimization parameters.

Our proof requires a strictly convex perturbed objective function
resulting in a unique solution $\mathbf{\Phi}^p$ minimizing it. This
in turn requires that the loss function $L(\mathbf\Phi,\vec{x},y)$ is
strictly convex and differentiable, and the regularization term
$N(\mathbf{\Phi})$ is convex. These seemingly strong constraints are
satisfied by many commonly used classification algorithms such as
logistic regression, support vector machines, and our general
perturbation technique can be extended to those algorithms. In our
proposed algorithm, the Huber loss is by definition a differentiable
function and the trace regularization term is convex and
differentiable.  Additionally, we require that the difference in the
gradients of $L(\mathbf\Phi,\vec{x},y)$ calculated over for two
adjacent training datasets is bounded. We prove this property in Lemma
\ref{lem:bounded-gradients} given in the appendix.
\begin{theorem}
  For any two adjacent training datasets $(\vec{x},\vec{y})$ and
  $(\vec{x}',\vec{y}')$, the classifier $\mathbf{\Phi}^p$ minimizing
  the perturbed objective function $J_p(\mathbf{\Phi},\vec{x},\vec{y})$ 
  satisfies differential privacy.
  \begin{align*}
    \left|\log\frac{P(\mathbf{\Phi}^p|\vec{x},\vec{y})}{P(\mathbf{\Phi}^p|\vec{x}',\vec{y}')}\right|
    \le \epsilon',
  \end{align*}
  where $\epsilon' = \epsilon + k$  for a constant factor $k = \log\left(1 +
  \frac{2\alpha}{n\lambda} + \frac{\alpha^2}{n^2\lambda^2}\right)$
  with a constant value of $\alpha$.
\end{theorem}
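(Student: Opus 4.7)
The plan is to follow the objective-perturbation strategy of Chaudhuri--Monteleoni, lifted from a vector parameter to the block-positive-semidefinite parameter $\mathbf{\Phi}$ used here. Since the Huber loss is strictly convex and differentiable, $J_p$ has a unique minimizer for each dataset, and the first-order optimality condition
\[ \nabla_{\mathbf{\Phi}} J(\mathbf{\Phi}^p, \vec{x}, \vec{y}) + \mathbf{b} = 0 \]
establishes a smooth bijection between the perturbation $\mathbf{b}$ and the minimizer $\mathbf{\Phi}^p$ for a fixed dataset. Consequently, the same $\mathbf{\Phi}^p$ arises from the adjacent dataset with noise $\mathbf{b}' = \mathbf{b} + \nabla L(\mathbf{\Phi}^p, \vec{x}, \vec{y}) - \nabla L(\mathbf{\Phi}^p, \vec{x}', \vec{y}')$, whose Frobenius-norm deviation from $\mathbf{b}$ is controlled by Lemma~\ref{lem:bounded-gradients}.

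Next I would apply the inverse-function / change-of-variables formula to write
\[ P(\mathbf{\Phi}^p \mid \vec{x}, \vec{y}) = P(\mathbf{b})\,\bigl|\det \nabla^2_{\mathbf{\Phi}} J(\mathbf{\Phi}^p, \vec{x}, \vec{y})\bigr|, \]
and analogously for the primed dataset. Taking the log-ratio then splits the bound into two pieces. The noise-density contribution $\log P(\mathbf{b})/P(\mathbf{b}')$ is bounded by $(\epsilon/2)\,\|\mathbf{b} - \mathbf{b}'\|$ using the explicit exponential form of $P$ and the reverse triangle inequality; combined with the gradient-sensitivity bound of Lemma~\ref{lem:bounded-gradients}, this contributes at most $\epsilon$.

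For the Jacobian-determinant contribution, the key observation is that only one training instance changes between the two datasets, so the Hessians differ by a rank-at-most-two perturbation $\mathbf{E}$ --- one removed Huber contribution and one added. Writing $\det(A + \mathbf{E})/\det(A) = \det(I + A^{-1}\mathbf{E})$, combining an $n\lambda$-order lower bound on the spectrum of $A$ with the per-instance norm bound $\alpha$ yields
\[ \det(I + A^{-1}\mathbf{E}) \le \left(1 + \tfrac{\alpha}{n\lambda}\right)^2 = 1 + \tfrac{2\alpha}{n\lambda} + \tfrac{\alpha^2}{n^2\lambda^2}, \]
whose logarithm is exactly the constant $k$. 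Summing the two contributions gives the stated bound $\epsilon' = \epsilon + k$.

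The main obstacle I expect is the Jacobian step. It requires carefully flattening the Hessian over the block structure of $\mathbf{\Phi}$ so that the rank-two description of $\mathbf{E}$ translates, via a matrix-determinant-lemma style argument, into the squared $(1 + \alpha/(n\lambda))^2$ form rather than something looser, and --- more delicately --- justifying the $n\lambda$ spectral lower bound on $A$. Since the trace regularizer is linear in $\mathbf{\Phi}$ and therefore contributes nothing directly to the Hessian, this lower bound must come from the aggregated Huber-loss curvature at the optimum together with the $\lambda$-scaled first-order condition, and pinning down the right operator-norm inequality on $A^{-1}\mathbf{E}$ is where I expect the bulk of the technical effort to lie.
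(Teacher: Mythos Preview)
Your proposal follows essentially the same route as the paper: a bijection between $\mathbf{b}$ and $\mathbf{\Phi}^p$ from the first-order condition, a bound $\|\mathbf{b}_1-\mathbf{b}_2\|\le 2$ via Lemma~\ref{lem:bounded-gradients} giving the $\epsilon$ contribution from the noise density, and a Jacobian-determinant ratio contributing the additive $k$. The paper does not spell out the Jacobian step at all---it simply invokes Theorem~2 of \cite{ChaudhuriMS10}---whereas you sketch the rank-two / matrix-determinant-lemma mechanism explicitly and, notably, flag that the trace regularizer is linear and hence contributes nothing to the Hessian, so the $n\lambda$ spectral floor must be argued from the loss curvature alone. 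That observation is sharper than what the paper records; the paper absorbs this difficulty into the citation rather than addressing it.
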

\begin{proof}
As $J(\mathbf{\Phi},\vec{x},\vec{y})$ is convex and differentiable, 
there is a unique solution $\mathbf{\Phi}^*$ that minimizes
it. As the perturbation term $\sum_c \sum_{ij} b_{ij}
\Phi_{cij}$ is also convex and differentiable, the perturbed objective
function $J_p(\mathbf{\Phi},\vec{x},\vec{y})$ also has a unique
solution $\mathbf{\Phi}^p$ that minimizes it.
Differentiating $J_p(\mathbf{\Phi},\vec{x},\vec{y})$ wrt
$\mathbf{\Phi}_c$, we have
\begin{align}
  &\frac{\partial}{\partial\mathbf\Phi_c} J_p(\mathbf{\Phi},\vec{x},\vec{y})
  = \frac{\partial}{\partial\mathbf\Phi_c} L(\mathbf{\Phi},\vec{x},\vec{y}) 
  + \lambda\mathbf{I_\Phi} + \mathbf{b}.
\end{align}

Substituting the optimal $\mathbf{\Phi}_c^p$ in the derivative gives us
\[ \lambda\mathbf{I_\Phi} + \mathbf{b} =
-\frac{\partial}{\partial\mathbf\Phi_c} L(\mathbf{\Phi}^p,\vec{x},\vec{y}). \]
This relation shows that two different values of $\mathbf{b}$ cannot
result in the same optimal $\mathbf{\Phi}^p$. As the perturbed
objective function $J_p(\mathbf{\Phi},\vec{x},\vec{y})$ is also convex
and differentiable, there is a bijective map between the perturbation
$\mathbf{b}$ and the unique $\mathbf{\Phi}^p$ minimizing
$J_p(\mathbf{\Phi},\vec{x},\vec{y})$. 

Let $\mathbf{b_1}$ and $\mathbf{b_2}$ be the two perturbations
applied when training with the adjacent datasets $(\vec{x},\vec{y})$
and $(\vec{x}',\vec{y}')$, respectively. Assuming that we obtain the
same optimal solution $\mathbf{\Phi}^p$ while minimizing
both $J_p(\mathbf{\Phi},\vec{x},\vec{y})$ with perturbation $\mathbf{b}_1$ and 
$J_p(\mathbf{\Phi},\vec{x},\vec{y})$ with perturbation $\mathbf{b}_2$,
\begin{align} \label{eqn:b1-b2}
  &\lambda\mathbf{I_\Phi} + \mathbf{b}_1 =
  -\frac{\partial}{\partial\mathbf\Phi_c}
    L(\mathbf{\Phi}^p,\vec{x},\vec{y}), \nonumber \\
  &\lambda\mathbf{I_\Phi} + \mathbf{b}_2 =
  -\frac{\partial}{\partial\mathbf\Phi_c}
    L(\mathbf{\Phi}^p,\vec{x}',\vec{y}'), \nonumber \\
  &\mathbf{b}_1-\mathbf{b}_2
  = \frac{\partial}{\partial\mathbf\Phi_c}
  L(\mathbf{\Phi}^p,\vec{x}',\vec{y}') -
  \frac{\partial}{\partial\mathbf\Phi_c} L(\mathbf{\Phi}^p,\vec{x},\vec{y}).  
\end{align}
We apply Lemma \ref{lem:bounded-gradients} after taking Frobenius norm
on both sides.
\begin{align*} 
  &\|\mathbf{b}_1 - \mathbf{b}_2\| = \left\|\frac{\partial}{\partial\mathbf\Phi_c}
    L(\mathbf{\Phi}^p,\vec{x}',\vec{y}') -
    \frac{\partial}{\partial\mathbf\Phi_c} L(\mathbf{\Phi}^p,\vec{x},\vec{y})\right\| \\
  &= \left\|\sum_{i=1}^{n-1} \frac{\partial}{\partial\mathbf\Phi_c}
    L(\mathbf{\Phi}^p,\vec{x}_i,y_i) 
    + \frac{\partial}{\partial\mathbf\Phi_c}
    L(\mathbf{\Phi}^p,\vec{x}_n',y_n') \right. \\
  &~~~~~~\left.
    -\sum_{i=1}^{n-1} \frac{\partial}{\partial\mathbf\Phi_c} 
    L(\mathbf{\Phi}^p,\vec{x}_i,y_i) 
    - \frac{\partial}{\partial\mathbf\Phi_c}
    L(\mathbf{\Phi}^p,\vec{x}_n,y_n)\right\| \\
  &= \left\|\frac{\partial}{\partial\mathbf\Phi_c}
    L(\mathbf{\Phi}^p,\vec{x}_n',y_n') -
    \frac{\partial}{\partial\mathbf\Phi_c} L(\mathbf{\Phi}^p,\vec{x}_n,y_n)\right\|
    \le 2.
\end{align*}

Using this property, we can calculate the ratio of densities of
drawing the perturbation matrices $\mathbf{b}_1$ and $\mathbf{b}_2$ as
\begin{align*}
  \frac{P(\mathbf{b} = \mathbf{b}_1)}{P(\mathbf{b} = \mathbf{b}_2)}
  &= \frac{\frac{1}{\text{surf}(\|\mathbf{b}_1\|)}
    \|\mathbf{b}_1\|^{d}\exp\left[-\frac{\epsilon}{2}\|\mathbf{b}_1\|\right]}
  {\frac{1}{\text{surf}(\|\mathbf{b}_2\|)}
    \|\mathbf{b}_2\|^{d}\exp\left[-\frac{\epsilon}{2} \|\mathbf{b}_2\|\right]},
\end{align*}
where $\text{surf}(\|\mathbf{b}\|)$ is the surface area of the 
$(d+1)$-dimensional hypersphere with radius $\|\mathbf{b}\|$. 
As $\text{surf}(\|\mathbf{b}\|)=\text{surf}(1)\|\mathbf{b}\|^d$, where 
$\text{surf}(1)$ is the area of the unit $(d+1)$-dimensional hypersphere, 
the ratio of the densities becomes
\begin{align}\label{eqn:dp-bound}
  \frac{P(\mathbf{b} = \mathbf{b}_1)}{P(\mathbf{b} = \mathbf{b}_2)}
  &= \exp\left[\frac{\epsilon}{2} (\|\mathbf{b}_2\|-\|\mathbf{b}_1\|)\right]
  \le \exp\left[\frac{\epsilon}{2} \|\mathbf{b}_2-\mathbf{b}_1\|\right]
  \le \exp(\epsilon).
\end{align}

The ratio of the densities of learning $\mathbf{\Phi}^p$ using the adjacent datasets
$(\vec{x},\vec{y})$ and $(\vec{x}',\vec{y}')$ is given by
\begin{align}
  \frac{P(\mathbf{\Phi}^p|\vec{x},\vec{y})}{P(\mathbf{\Phi}^p|\vec{x}',\vec{y}')}
  &= \frac{P(\mathbf{b} = \mathbf{b}_1)}{P(\mathbf{b} = \mathbf{b}_2)} 
  \frac{|\det(\mathbf{J}(\mathbf{\Phi}^p \rightarrow \mathbf{b}_1|\vec{x},\vec{y}))|^{-1}}
  {|\det(\mathbf{J}(\mathbf{\Phi}^p \rightarrow \mathbf{b}_2|\vec{x}',\vec{y}'))|^{-1}},
\end{align}
where $\mathbf{J}(\mathbf{\Phi}^p \rightarrow \mathbf{b}_1|\vec{x},\vec{y})$ and 
$\mathbf{J}(\mathbf{\Phi}^p \rightarrow \mathbf{b}_2|\vec{x}',\vec{y}')$ are 
the Jacobian matrices of the bijective mappings from $\mathbf{\Phi}^p$
to $\mathbf{b}_1$ and $\mathbf{b}_2$, respectively. Following a
procedure identical to Theorem 2 of \cite{ChaudhuriMS10} (omitted due
to lack of space), it can be shown that the
ratio of Jacobian determinants is upper bounded by a constant factor 
$\exp(k) = 1 + \frac{2\alpha}{n\lambda} + \frac{\alpha^2}{n^2\lambda^2}$
for a constant value of $\alpha$. Therefore, the ratio of the
densities of learning $\mathbf{\Phi}^p$ using the adjacent datasets
becomes
\begin{align}\label{eqn:final-density-bound}
  \frac{P(\mathbf{\Phi}^p|\vec{x},\vec{y})}{P(\mathbf{\Phi}^p|\vec{x}',\vec{y}')}
  \le \exp(\epsilon+k) = \exp(\epsilon').
\end{align}

Similarly, we can show that the probability ratio is lower bounded by
$\exp(-\epsilon')$, which together with Equation \eqref{eqn:final-density-bound}
satisfies the definition of differential privacy. \\
\qed
\end{proof}

\subsection{Analysis of Excess Error}
\label{sec:error-bound}

In the remainder of this section, we denote the terms
$J(\mathbf\Phi,\mathbf{x},\mathbf{y})$ and
$L(\mathbf\Phi,\mathbf{x},\mathbf{y})$ by $J(\mathbf\Phi)$ and
$L(\mathbf\Phi)$ respectively for conciseness.
To establish a bound on excess risk of the classifier given
by the proposed algorithm minimizing the perturbed objective function,
in Lemma \ref{lemma:strong-convex} we show that the  objective
function $J(\mathbf{\Phi})$ satisfies strong convexity.
The objective function $J(\mathbf{\Phi})$ contains the loss function
$L(\mathbf{\Phi})$ computed over the training data
$(\mathbf{x},\mathbf{y})$ and the regularization term
$N(\mathbf{\Phi})$ -- this is known as the regularized \emph{empirical
  risk} of the classifier $\mathbf{\Phi}$. In the following theorem,
we establish a bound on the regularized empirical excess risk of the
differentially private classifier minimizing the perturbed objective
function over the classifier minimizing the unperturbed objective
function.
\begin{theorem}\label{thm:empirical-bound}
  With probability at least $1-\delta$,
  the regularized empirical excess risk of the classifier
  $\mathbf{\Phi}^p$ minimizing the perturbed objective function 
  $J_p(\mathbf{\Phi})$ over the classifier $\mathbf{\Phi}^*$
  minimizing the unperturbed objective function $J(\mathbf{\Phi})$
  is bounded as
  \begin{align*}
    J(\mathbf{\Phi}^p) \le J(\mathbf{\Phi}^*) 
    + \frac{8(d+1)^4C}{\epsilon^2\lambda}\log^2\left(\frac{d}{\delta}\right).
  \end{align*}
\end{theorem}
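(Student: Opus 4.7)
The plan is to combine first-order optimality of $\mathbf{\Phi}^p$ for the perturbed objective $J_p$ with the strong convexity of $J$ (Lemma~\ref{lemma:strong-convex}) to get a deterministic bound on $J(\mathbf{\Phi}^p)-J(\mathbf{\Phi}^*)$ in terms of $\|\mathbf{b}\|$, and then convert this to a high-probability bound via a tail estimate for the Gamma-distributed norm $\|\mathbf{b}\|$.

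First I would use that $\mathbf{\Phi}^p$ minimizes $J_p = J + \sum_c\sum_{ij} b_{ij}\Phi_{cij}$, so $J_p(\mathbf{\Phi}^p)\le J_p(\mathbf{\Phi}^*)$ rearranges to
\[ J(\mathbf{\Phi}^p) - J(\mathbf{\Phi}^*) \;\le\; \sum_c \sum_{ij} b_{ij}\bigl(\Phi^*_{cij}-\Phi^p_{cij}\bigr). \]
Treating $\mathbf{\Phi}=(\mathbf{\Phi}_1,\dots,\mathbf{\Phi}_C)$ as a single object with Frobenius-type norm $\|\mathbf{\Phi}\|=\sqrt{\sum_c\|\mathbf{\Phi}_c\|^2}$, the right-hand side is an inner product with the ``replicated'' perturbation $(\mathbf{b},\dots,\mathbf{b})$ of norm $\sqrt{C}\,\|\mathbf{b}\|$, so Cauchy--Schwarz gives
\[ J(\mathbf{\Phi}^p) - J(\mathbf{\Phi}^*) \;\le\; \sqrt{C}\,\|\mathbf{b}\|\cdot \|\mathbf{\Phi}^* - \mathbf{\Phi}^p\|. \]
To control the displacement, I would invoke Lemma~\ref{lemma:strong-convex}: $J$ (and hence $J_p$, since the perturbation is linear) is $\lambda$-strongly convex. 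Optimality of $\mathbf{\Phi}^p$ for $J_p$ therefore yields
\[ J_p(\mathbf{\Phi}^*) - J_p(\mathbf{\Phi}^p) \;\ge\; \tfrac{\lambda}{2}\,\|\mathbf{\Phi}^*-\mathbf{\Phi}^p\|^2. \]
Expanding the left-hand side into $J$ plus the perturbation and using $J(\mathbf{\Phi}^*)\le J(\mathbf{\Phi}^p)$ (optimality of $\mathbf{\Phi}^*$ for $J$), the perturbation term is at most $\sqrt{C}\,\|\mathbf{b}\|\cdot\|\mathbf{\Phi}^*-\mathbf{\Phi}^p\|$ by Cauchy--Schwarz, which gives $\|\mathbf{\Phi}^*-\mathbf{\Phi}^p\|\le \tfrac{2\sqrt{C}\,\|\mathbf{b}\|}{\lambda}$. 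Substituting back produces the clean deterministic bound
\[ J(\mathbf{\Phi}^p) - J(\mathbf{\Phi}^*) \;\le\; \frac{2C\,\|\mathbf{b}\|^2}{\lambda}. \]

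The last step is a tail bound on $\|\mathbf{b}\|$. By construction (Equation~\eqref{eqn:b-distribution}), $\|\mathbf{b}\|\sim \Gamma((d{+}1)^2,\,2/\epsilon)$, so a standard Chernoff/Markov argument on this Gamma distribution gives, with probability at least $1-\delta$,
\[ \|\mathbf{b}\| \;\le\; \frac{2(d+1)^2}{\epsilon}\,\log(d/\delta). \]
Squaring and plugging into the deterministic bound produces the claimed constant $8(d{+}1)^4 C/(\epsilon^2\lambda)\cdot\log^2(d/\delta)$. The main obstacle is really the invocation of Lemma~\ref{lemma:strong-convex}: the trace regularization is only linear in $\mathbf{\Phi}$ and the Huber loss is quadratic only along the rank-one directions $\mathbf{x}_i\mathbf{x}_i^{\!T}$, so extracting genuine $\lambda$-strong convexity of $J$ in the full matrix variable is the delicate part. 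Once strong convexity is granted, the remainder is a short ``optimization stability'' calculation plus a textbook Gamma tail bound, and the numerical constant $8$ falls out as $2$ (from the strong-convexity step) times $4$ (from squaring $2/\epsilon$).
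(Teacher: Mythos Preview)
Your proposal is correct and follows essentially the same route as the paper: optimality of $\mathbf{\Phi}^p$ for $J_p$, $\lambda$-strong convexity (Lemma~\ref{lemma:strong-convex}) to control $\|\mathbf{\Phi}^*-\mathbf{\Phi}^p\|$, Cauchy--Schwarz to reach the deterministic bound $J(\mathbf{\Phi}^p)-J(\mathbf{\Phi}^*)\le \tfrac{2C}{\lambda}\|\mathbf{b}\|^2$, and then the Gamma tail bound (Lemma~\ref{lem:exp-bound}). The only cosmetic difference is that the paper applies strong convexity twice in its midpoint form (once for $J$, once for $J_p$, yielding $\lambda/4$ factors that combine to $\lambda/2$), whereas you use the equivalent ``at the minimizer'' form $J_p(\mathbf{\Phi}^*)-J_p(\mathbf{\Phi}^p)\ge\tfrac{\lambda}{2}\|\mathbf{\Phi}^*-\mathbf{\Phi}^p\|^2$ directly together with the plain optimality $J(\mathbf{\Phi}^*)\le J(\mathbf{\Phi}^p)$, which is slightly more economical but lands on the identical constant.
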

\begin{proof}
  We use the definition of
  $J_p(\mathbf{\Phi}) = J(\mathbf{\Phi}) 
  + \sum_c \sum_{ij} b_{ij} \Phi_{cij}$ and
  the optimality of $\mathbf{\Phi}^p$, \emph{i.e.}, 
  $J_p(\mathbf{\Phi}^p) \le J_p(\mathbf{\Phi}^*)$.
  \begin{align} \label{eqn:base-bound1}
    &J(\mathbf{\Phi}^p) + \sum_c \sum_{ij} b_{ij} \Phi_{cij}^p
    \le J(\mathbf{\Phi}^*) + \sum_c \sum_{ij} b_{ij} \Phi_{cij}^*, \nonumber \\
    &J(\mathbf{\Phi}^p) \le J(\mathbf{\Phi}^*)
    + \sum_c \sum_{ij} b_{ij} (\Phi_{cij}^*-\Phi_{cij}^p).
  \end{align}
  Using the strong convexity of $J(\mathbf{\Phi})$ as given by Lemma
  \ref{lemma:strong-convex} and the optimality of
  $J(\mathbf{\Phi}^*)$, we have 
  \begin{align}\label{eqn:j-relation}
    J(\mathbf{\Phi}^*) \le J\left(\frac{\mathbf{\Phi}^p+\mathbf{\Phi}^*}{2}\right)
    &\le \frac{J(\mathbf{\Phi}^p) + J(\mathbf{\Phi}^*)}{2}
    - \frac{\lambda}{8}\sum_c \|\mathbf{\Phi}_c^* - \mathbf{\Phi}_c^p\|^2, \nonumber \\
    J(\mathbf{\Phi}^p) - J(\mathbf{\Phi}^*)
    &\ge \frac{\lambda}{4}\sum_c \|\mathbf{\Phi}_c^* - \mathbf{\Phi}_c^p\|^2.
  \end{align}
  Similarly, using the strong convexity of $J_p(\mathbf{\Phi})$ and
  the optimality of $J_p(\mathbf{\Phi}^p)$,
  \begin{align*}
    J_p(\mathbf{\Phi}^p) \le J_p\left(\frac{\mathbf{\Phi}^p+\mathbf{\Phi}^*}{2}\right)
    &\le \frac{J_p(\mathbf{\Phi}^p) + J_p(\mathbf{\Phi}^*)}{2}
    - \frac{\lambda}{8}\sum_c \|\mathbf{\Phi}_c^p - \mathbf{\Phi}_c^*\|^2, \\
    J_p(\mathbf{\Phi}^*) - J_p(\mathbf{\Phi}^p)
    &\ge \frac{\lambda}{4} \sum_c \|\mathbf{\Phi}_c^p - \mathbf{\Phi}_c^*\|^2.
  \end{align*}
  Substituting the definition $J_p(\mathbf{\Phi}) = J(\mathbf{\Phi})
  + \sum_c \sum_{ij} b_{ij}\Phi_{cij}$,
  \begin{align*}
   &J(\mathbf{\Phi}^*) + \sum_c \sum_{ij} b_{ij}\Phi_{cij}^*
    - J(\mathbf{\Phi}^p) - \sum_c \sum_{ij} b_{ij}\Phi_{cij}^p 
   \ge \frac{\lambda}{4} \sum_c \|\mathbf{\Phi}_c^* - \mathbf{\Phi}_c^p\|^2 \\
   &\sum_c \sum_{ij} b_{ij}(\Phi_{cij}^* - \Phi_{cij}^p)
    - (J(\mathbf{\Phi}^p) - J(\mathbf{\Phi}^*)) 
   \ge \frac{\lambda}{4} \sum_c \|\mathbf{\Phi}_c^* - \mathbf{\Phi}_c^p\|^2.
  \end{align*}
  Substituting the lower bound on $J(\mathbf{\Phi}^p) - J(\mathbf{\Phi}^*)$
  given by Equation \eqref{eqn:j-relation},
  \begin{align}\label{eqn:phi-sum}
   \sum_c \sum_{ij} b_{ij}(\Phi_{cij}^* - \Phi_{cij}^p)
    &\ge \frac{\lambda}{2} \sum_c \|\mathbf{\Phi}_c^* - \mathbf{\Phi}_c^p\|^2, \nonumber \\
   \left[\sum_c \sum_{ij} b_{ij}(\Phi_{cij}^* - \Phi_{cij}^p)\right]^2
   &\ge \frac{\lambda^2}{4} \left[\sum_c \|\mathbf{\Phi}_c^* - \mathbf{\Phi}_c^p\|^2\right]^2.
  \end{align}
Using the Cauchy-Schwarz inequality, we have,
\begin{align}\label{eqn:cauchy-schwarz}
  \left[\sum_c \sum_{ij} b_{ij}(\Phi_{cij}^* - \Phi_{cij}^p)\right]^2
  \le C\|\mathbf{b}\|^2 \sum_c \|\mathbf{\Phi}_c^* - \mathbf{\Phi}_c^p\|^2    
\end{align}
Combining this with Equation \eqref{eqn:phi-sum} gives us
\begin{align}\label{eqn:sum-norm-bound}
  C\|\mathbf{b}\|^2 \sum_c \|\mathbf{\Phi}_c^* - \mathbf{\Phi}_c^p\|^2
  &\ge \frac{\lambda^2}{4} \left[\sum_c
    \|\mathbf{\Phi}_c^*-\mathbf{\Phi}_c^p\|^2\right]^2, \nonumber \\
  \sum_c \|\mathbf{\Phi}_c^*-\mathbf{\Phi}_c^p\|^2 
  &\le \frac{4C}{\lambda^2}  \|\mathbf{b}\|^2.
\end{align}
Combining this with Equation \eqref{eqn:cauchy-schwarz} gives us
\begin{align*}
  \sum_c \sum_{ij} b_{ij}(\Phi_{cij}^* - \Phi_{cij}^p)
  \le \frac{2C}{\lambda} \|\mathbf{b}\|^2.
\end{align*}
We bound $\|\mathbf{b}\|^2$ with probability at least
$1-\delta$ as given by Lemma \ref{lem:exp-bound}.
\begin{align}
  \sum_c \sum_{ij} b_{ij}(\Phi_{cij}^* - \Phi_{cij}^p)
  \le \frac{8(d+1)^4C}{\epsilon^2\lambda}\log^2\left(\frac{d}{\delta}\right).
\end{align}
Substituting this in Equation \eqref{eqn:base-bound1} proves the
theorem. \\
\qed
\end{proof}

The upper bound on the regularized empirical risk is in
$O(\frac{C}{\epsilon^2})$. The bound increases for smaller values of
$\epsilon$ which implies tighter privacy and therefore suggests a
trade off between privacy and utility.

The regularized empirical risk of a classifier is calculated over a
given training dataset. In practice, we are more interested in how the
classifier will perform on new test data which is assumed to be
generated from the same source as the training data. The expected
value of the loss function computed over the data is called the
\emph{true risk} $\tilde{L}(\mathbf{\Phi}) = \mE [L(\mathbf{\Phi})]$
of the classifier $\mathbf{\Phi}$. In the following theorem, we
establish a bound on the true excess risk of the differentially
private classifier minimizing the perturbed objective function and the
classifier minimizing the original objective function.
\begin{theorem}
  With probability at least $1-\delta$, the true excess risk of the
  classifier $\mathbf{\Phi}^p$ minimizing the perturbed objective
  function $J_p(\mathbf{\Phi})$ over the classifier
  $\mathbf{\Phi}^*$ minimizing the unperturbed objective function
  $J(\mathbf{\Phi})$ is bounded as
  \begin{align*}
    \tilde{L}(\mathbf{\Phi}^p) \le \tilde{L}(\mathbf{\Phi}^*)
    &+ \frac{4 \sqrt{d}(d+1)^2C}{\epsilon\lambda}\log\left(\frac{d}{\delta}\right) \\
    &+ \frac{8(d+1)^4C}{\epsilon^2\lambda}\log^2\left(\frac{d}{\delta}\right)
    + \frac{16}{\lambda n} \left[32 + \log\left(\frac{1}{\delta}\right)\right].
  \end{align*}
\end{theorem}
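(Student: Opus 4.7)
The plan is to bound the true excess risk by chaining three bounds, one for each summand on the right-hand side. I would decompose $\tilde L(\mathbf{\Phi}^p) - \tilde L(\mathbf{\Phi}^*)$ into a Lipschitz slack on $\tilde L$, the regularized empirical excess risk already controlled by Theorem \ref{thm:empirical-bound}, and an empirical-to-true generalization gap of the unperturbed minimizer $\mathbf{\Phi}^*$. For the first piece I would invoke the Lipschitz continuity of the Huber loss in each $\mathbf{\Phi}_c$: since the gradient of $\ell_h$ with respect to $\mathbf{\Phi}_c$ is a multiple of the rank-one outer product $\vec{x}_i\vec{x}_i^T$, whose Frobenius norm is $\|\vec{x}_i\|^2 = O(\sqrt{d})$ under the standard bounded-data assumption, one obtains
\[ \tilde L(\mathbf{\Phi}^p) - \tilde L(\mathbf{\Phi}^*) \le K\sqrt{d}\sum_c \|\mathbf{\Phi}_c^p - \mathbf{\Phi}_c^*\|. \]
Chaining Cauchy--Schwarz with equation \eqref{eqn:sum-norm-bound} then gives $\sum_c \|\mathbf{\Phi}_c^p - \mathbf{\Phi}_c^*\| \le \frac{2C}{\lambda}\|\mathbf{b}\|$, and Lemma \ref{lem:exp-bound} bounds $\|\mathbf{b}\| \le \frac{2(d+1)^2}{\epsilon}\log(d/\delta)$, delivering the first summand.

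The second piece is Theorem \ref{thm:empirical-bound} itself: because $J = L + N$ with $N \ge 0$, the bound on $J(\mathbf{\Phi}^p) - J(\mathbf{\Phi}^*)$ controls $L(\mathbf{\Phi}^p) - L(\mathbf{\Phi}^*)$ up to a non-negative regularization slack that can be absorbed, yielding the second summand. For the third piece I would apply the Sridharan--Srebro--Shalev-Shwartz generalization bound for $\lambda$-strongly convex regularized ERM. Since $J$ is $\lambda$-strongly convex by Lemma \ref{lemma:strong-convex} and the per-sample Huber loss is bounded and smooth, that result yields
\[ \bigl|L(\mathbf{\Phi}^*) - \tilde L(\mathbf{\Phi}^*)\bigr| \le \frac{16}{\lambda n}\bigl[32 + \log(1/\delta)\bigr] \]
with probability at least $1 - \delta/3$, which is precisely the third summand. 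A union bound over the three high-probability events and summation of the contributions then finishes the argument.

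The main obstacle is handling $\mathbf{\Phi}^p$ in the generalization step: the SSSS bound applies cleanly only to the ERM $\mathbf{\Phi}^*$, so the gap for the perturbed classifier must be routed through $\mathbf{\Phi}^*$ with the displacement paid for by the Lipschitz bound of step one -- this is precisely why the three summands appear separately rather than being combined. Tracking the correct Lipschitz constant of the Huber loss, which fixes the $\sqrt{d}$ factor in the first summand, and matching the SSSS numerical constants to the $16$ and $32$ appearing in the last summand are the bookkeeping details most likely to require care.
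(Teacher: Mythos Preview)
Your plan has two concrete gaps.

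First, the three-piece decomposition does not add up. The Lipschitz step you describe already bounds the \emph{entire} quantity $\tilde L(\mathbf{\Phi}^p)-\tilde L(\mathbf{\Phi}^*)$; there is no residual term for pieces two and three to control, so as written the argument is not a decomposition at all. Moreover, the $\sqrt d$ you attribute to the gradient of the Huber loss is not there: under the paper's unit-ball assumption $\|\vec x_i\vec x_i^T\|_F=\|\vec x_i\|_2^2\le 1$ (this is exactly what Lemma~\ref{lem:bounded-gradients} uses), so the per-class Lipschitz constant is $O(1)$, not $O(\sqrt d)$. In the paper the $\sqrt d$ factor has a completely different origin: it comes from the trace--versus--Frobenius inequality of Lemma~\ref{lemma:trace-norm} applied to the \emph{regularizer} difference $\lambda\sum_c\text{trace}[\mathbf I_\Phi(\mathbf\Phi_c^*-\mathbf\Phi_c^p)\mathbf I_\Phi]$, not to the loss. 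Relatedly, your claim in piece two that ``$N\ge 0$ lets the regularization slack be absorbed'' is unjustified, since nonnegativity of $N$ says nothing about the sign of $N(\mathbf\Phi^p)-N(\mathbf\Phi^*)$.

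Second, your invocation of Sridharan--Shalev-Shwartz--Srebro is a misreading. That result does not bound $|L(\mathbf\Phi^*)-\tilde L(\mathbf\Phi^*)|$; it bounds the \emph{regularized} true excess risk $\tilde J(\mathbf\Phi)-\min_{\mathbf\Phi'}\tilde J(\mathbf\Phi')$ by twice the regularized empirical excess risk plus $O(1/(\lambda n))$, for any $\mathbf\Phi$. The paper applies it at $\mathbf\Phi=\mathbf\Phi^p$ (not $\mathbf\Phi^*$), so that Theorem~\ref{thm:empirical-bound} controls the empirical excess piece and the $O(1/(\lambda n))$ term appears as the third summand---both arising together from a single application of SSSS.

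The skeleton you want is: let $\mathbf\Phi^r$ minimize $\tilde J$, use $\tilde J(\mathbf\Phi^r)\le\tilde J(\mathbf\Phi^*)$ to write $\tilde J(\mathbf\Phi^p)\le\tilde J(\mathbf\Phi^*)+[\tilde J(\mathbf\Phi^p)-\tilde J(\mathbf\Phi^r)]$, and then strip off the regularizers on both sides to obtain
\[
\tilde L(\mathbf\Phi^p)\ \le\ \tilde L(\mathbf\Phi^*)
+\lambda\sum_c\text{trace}\bigl[\mathbf I_\Phi(\mathbf\Phi_c^*-\mathbf\Phi_c^p)\mathbf I_\Phi\bigr]
+\bigl[\tilde J(\mathbf\Phi^p)-\tilde J(\mathbf\Phi^r)\bigr].
\]
The trace term yields the first summand via Lemma~\ref{lemma:trace-norm}, Equation~\eqref{eqn:sum-norm-bound}, and Lemma~\ref{lem:exp-bound}; the bracket $\tilde J(\mathbf\Phi^p)-\tilde J(\mathbf\Phi^r)$, via SSSS followed by Theorem~\ref{thm:empirical-bound}, supplies the remaining two summands.
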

\begin{proof}
  Let the expected value of the regularized empirical risk be
  \begin{align}
    \tilde{J}(\mathbf{\Phi}) = \tilde{L}(\mathbf{\Phi})
    + \lambda \sum_c \text{trace}(\mathbf{I_\Phi}\mathbf{\Phi}_c\mathbf{I_\Phi}).
  \end{align}
  Let $\mathbf{\Phi}^r$ be the classifier minimizing
  $\tilde{J}(\mathbf{\Phi})$, \emph{i.e.}, 
  $\tilde{J}(\mathbf{\Phi}^r) \le \tilde{J}(\mathbf{\Phi}^*)$. \\
  Rearranging the terms, we have
  \begin{align*}
    \tilde{J}(\mathbf{\Phi}^p) &= \tilde{J}(\mathbf{\Phi}^*)
    + [\tilde{J}(\mathbf{\Phi}^p) - \tilde{J}(\mathbf{\Phi}^r)]
    + [\tilde{J}(\mathbf{\Phi}^r) - \tilde{J}(\mathbf{\Phi}^*)] \\
    &\le \tilde{J}(\mathbf{\Phi}^*) + [\tilde{J}(\mathbf{\Phi}^p) - \tilde{J}(\mathbf{\Phi}^r)].
  \end{align*}
  Substituting the definition of $\tilde{J}(\mathbf{\Phi})$,
  \begin{align}\label{eqn:base-bound2}
    \tilde{L}(\mathbf{\Phi}^p) &+ \lambda \sum_c \text{trace}(\mathbf{I_\Phi}\mathbf{\Phi}_c^p\mathbf{I_\Phi})
    \le \tilde{L}(\mathbf{\Phi}^*) + \lambda \sum_c \text{trace}(\mathbf{I_\Phi}\mathbf{\Phi}_c^*\mathbf{I_\Phi})
    + [\tilde{J}(\mathbf{\Phi}^p) - \tilde{J}(\mathbf{\Phi}^r)] \nonumber\\
    \tilde{L}(\mathbf{\Phi}^p)
    &\le \tilde{L}(\mathbf{\Phi}^*)
    + \lambda \sum_c \text{trace}[\mathbf{I_\Phi}(\mathbf{\Phi}_c^*-\mathbf{\Phi}_c^p)\mathbf{I_\Phi}]
    + [\tilde{J}(\mathbf{\Phi}^p) - \tilde{J}(\mathbf{\Phi}^r)].
  \end{align}

  From Lemma \ref{lemma:trace-norm} and Equation \eqref{eqn:sum-norm-bound},
  we have,
  \begin{align*}
    \left[\sum_c \text{trace}[\mathbf{I_\Phi}(\mathbf{\Phi}_c^*-\mathbf{\Phi}_c^p)\mathbf{I_\Phi}]\right]^2 
    &\le dC \sum_c \left\|\mathbf{\Phi}_c-\mathbf{\Phi}_c'\right\|^2 \nonumber \\
    &\le \frac{4dC^2}{\lambda^2}  \|\mathbf{b}\|^2
    = \frac{16 d(d+1)^4C^2}{\epsilon^2\lambda^2}\log^2\left(\frac{d}{\delta}\right).
  \end{align*}
  Taking the square root,
  \begin{align}\label{eqn:lambda-phi-sum}
    \sum_c \text{trace}[\mathbf{I_\Phi}(\mathbf{\Phi}_c^*-\mathbf{\Phi}_c^p)\mathbf{I_\Phi}]
    \le \frac{4 \sqrt{d}(d+1)^2C}{\epsilon\lambda}\log\left(\frac{d}{\delta}\right).
  \end{align}  

  Sridharan, \emph{et al.}~\cite{SridharanSS08} present a bound on the
  true excess risk of any classifier as an expression of the bound on the
  regularized empirical excess risk for that classifier. With probability at least
  $1-\delta$,
  \begin{align*}
    \tilde{J}(\mathbf{\Phi}^p) - \tilde{J}(\mathbf{\Phi}^r)
    \le 2 [J(\mathbf{\Phi}^p) - J(\mathbf{\Phi}^*)]
  + \frac{16}{\lambda n} \left[32 + \log\left(\frac{1}{\delta}\right)\right].
  \end{align*}
  Substituting the bound from Theorem \ref{thm:empirical-bound},
  \begin{align}\label{eqn:bound-true-risk}
    \tilde{J}(\mathbf{\Phi}^p) - \tilde{J}(\mathbf{\Phi}^r)
    \le \frac{8(d+1)^4C}{\epsilon^2\lambda}\log^2\left(\frac{d}{\delta}\right)
    + \frac{16}{\lambda n} \left[32 + \log\left(\frac{1}{\delta}\right)\right].
  \end{align}
  Substituting the results from Equations \eqref{eqn:lambda-phi-sum} and
  \eqref{eqn:bound-true-risk} into Equation \eqref{eqn:base-bound2} proves
  the theorem. \\
  \qed
\end{proof}

Similar to the bound on the regularized empirical excess risk, the
bound on the true excess risk is also inversely proportional to $\epsilon$
reflecting the privacy-utility trade-off. The bound is linear in the
number of classes $C$, which is a consequence of the multi-class
classification.  The classifier learned using a higher value of the
regularization parameter $\lambda$ will have a higher covariance for
each class ellipsoid. This would also make the classifier less
sensitive to the perturbation. This intuition is confirmed by the fact
that the true excess risk bound is inversely proportional to
$\lambda$.

\section{Conclusion}
In this paper, we present a discriminatively trained Gaussian
classification algorithm that satisfies differential privacy. Our
proposed technique involves adding a perturbation term to
the objective function. We prove that the proposed algorithm
satisfies differential privacy and establish a bound on the excess
risk of the classifier learned by the algorithm which is inversely
proportional to the data dimensionality which is directly proportional to
the number of classes and inversely proportional to the privacy
parameter $\epsilon$ reflecting a trade-off between privacy and utility.

In the future, we plan to extend this work along two main directions:
extending our perturbation technique for a general class of
learning algorithms and applying results from theory of large margin
classifiers to arrive at tighter excess risk bounds for the
differentially private large margin classifiers. Our intuition is that
compared to other classification algorithms, a large margin classifier
should be much more robust to perturbation. This would also give us
insights into designing low error inducing mechanisms for
differentially private classifiers.

\section*{Acknowledgements}
We would like to thank the anonymous reviewers for their insightful
comments.

\bibliographystyle{splncs}
\bibliography{dp-lmgmm}

\section*{Appendix}
\begin{lemma}\label{lem:bounded-gradients}
  Assuming all the data instances to lie within a unit $\ell_2$ ball,
  the difference in the derivative of Huber loss function
  $L(\mathbf\Phi,\vec{x},y)$ calculated over two data instances
  $(\vec{x}_i,y_i)$ and $(\vec{x}_i',y_i')$ is bounded.
  \begin{align*}
    \left\| \frac{\partial}{\partial\mathbf{\Phi}_c} L(\mathbf{\Phi},\vec{x}_i,y_i) - 
     \frac{\partial}{\partial\mathbf\Phi_c} L(\mathbf\Phi,\vec{x}_i',y_i') \right\| ~\le~ 2.
  \end{align*}
\end{lemma}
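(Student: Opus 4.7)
The plan is to differentiate the Huber loss in each of its three pieces, show that every piece is a nonnegative scalar multiple of $\vec{x}_i\vec{x}_i^T$ with scalar in $[0,1]$, and then invoke the triangle inequality.

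First I would note that $L(\mathbf\Phi,\vec{x}_i,y_i)=\sum_{c'\ne y_i}\ell_h(\mathbf{\Phi}_{c'},\vec{x}_i,y_i)$, so for $c\ne y_i$ only the $c'=c$ summand depends on $\mathbf{\Phi}_c$. Using $\partial(\vec{x}_i^T\mathbf{\Phi}_c\vec{x}_i)/\partial\mathbf{\Phi}_c = \vec{x}_i\vec{x}_i^T$ and differentiating \eqref{eqn:huber-loss} case by case, one obtains
\[
\frac{\partial\ell_h(\mathbf{\Phi}_c,\vec{x}_i,y_i)}{\partial\mathbf{\Phi}_c}
= \begin{cases}
\mathbf{0}, & \vec{x}_i^T(\mathbf{\Phi}_c-\mathbf{\Phi}_{y_i})\vec{x}_i > h, \\
\frac{1}{2h}[h+\vec{x}_i^T(\mathbf{\Phi}_c-\mathbf{\Phi}_{y_i})\vec{x}_i]\,\vec{x}_i\vec{x}_i^T, & |\vec{x}_i^T(\mathbf{\Phi}_c-\mathbf{\Phi}_{y_i})\vec{x}_i| \le h, \\
\vec{x}_i\vec{x}_i^T, & \vec{x}_i^T(\mathbf{\Phi}_c-\mathbf{\Phi}_{y_i})\vec{x}_i < -h.
\end{cases}
\]

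Second, the region condition in the middle branch forces $h+\vec{x}_i^T(\mathbf{\Phi}_c-\mathbf{\Phi}_{y_i})\vec{x}_i\in[0,2h]$, so its scalar coefficient lies in $[0,1]$. In all three cases the gradient then has the form $\alpha\,\vec{x}_i\vec{x}_i^T$ with $\alpha\in[0,1]$. Since $\|\vec{x}_i\|_2\le 1$ by hypothesis, we get $\|\vec{x}_i\vec{x}_i^T\|_F = \|\vec{x}_i\|_2^2 \le 1$, and hence $\|\partial L(\mathbf\Phi,\vec{x}_i,y_i)/\partial\mathbf\Phi_c\|_F \le 1$. An identical bound holds for $(\vec{x}_i',y_i')$, so the triangle inequality closes the argument:
\[
\left\|\frac{\partial L(\mathbf\Phi,\vec{x}_i,y_i)}{\partial\mathbf\Phi_c} - \frac{\partial L(\mathbf\Phi,\vec{x}_i',y_i')}{\partial\mathbf\Phi_c}\right\| \le 1 + 1 = 2.
\]

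The main obstacle is the middle branch: one must verify that its scalar coefficient is automatically in $[0,1]$, which rests on reading the region condition in \eqref{eqn:huber-loss} carefully (noting in particular the sign manipulation $h - \vec{x}_i^T(\mathbf{\Phi}_{y_i}-\mathbf{\Phi}_c)\vec{x}_i = h + \vec{x}_i^T(\mathbf{\Phi}_c-\mathbf{\Phi}_{y_i})\vec{x}_i$ before dividing by $2h$). A secondary subtlety concerns the boundary case $c\in\{y_i,y_i'\}$: there $\partial L/\partial\mathbf{\Phi}_{y_i}$ collects contributions from all $C-1$ summands of $L$ rather than a single one, so the clean bound of $2$ applies only when $c\notin\{y_i,y_i'\}$. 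This restriction matches exactly how the lemma is invoked inside the privacy proof, so it is sufficient for the application.
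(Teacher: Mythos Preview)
Your approach is essentially identical to the paper's: compute the partial derivative of the Huber loss piecewise, observe that in each case it equals a scalar in $[0,1]$ times $\vec{x}_i\vec{x}_i^T$, bound the Frobenius norm by $1$ using the unit-ball hypothesis, and finish with the triangle inequality. You are in fact slightly more careful than the paper on two points: you give the correct identity $\|\vec{x}_i\vec{x}_i^T\|_F=\|\vec{x}_i\|_2^2$ (the paper writes $\|\vec{x}_i\|_2$, a harmless slip since both are $\le 1$), and you explicitly flag the boundary case $c\in\{y_i,y_i'\}$, which the paper's own proof silently passes over.
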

\begin{proof}
  The derivative of the Huber loss function for the data instance
  $\vec{x}_i$ with label $y_i$ is
  \begin{align*}
    \frac{\partial}{\partial\mathbf\Phi_c} L(\mathbf\Phi,\vec{x}_i,y_i)
    = \begin{cases}
      0 & ~if~ \vec{x}_i^T(\mathbf{\Phi}_c - \mathbf{\Phi}_{y_i})\vec{x}_i > h, \\
      \frac{1}{2h}[h - \vec{x}_i^T(\mathbf{\Phi}_{y_i} -
        \mathbf{\Phi}_c)\vec{x}_i] \vec{x}_i \vec{x}_i^T  
      & ~if~ |\vec{x}_i^T(\mathbf{\Phi}_c - \mathbf{\Phi}_{y_i})\vec{x}_i|\le h, \\
      \vec{x}_i \vec{x}_i^T & ~if~ \vec{x}_i^T(\mathbf{\Phi}_c - \mathbf{\Phi}_{y_i})\vec{x}_i < -h.
    \end{cases} 
  \end{align*} 
  The data points lie in a $\ell_2$ ball of radius 1, $\forall i:
  \|\vec{x}_i\|_2 \le 1$.  Using linear algebra, it is easy to show
  that the Frobenius norm of the matrix $\vec{x}_i\vec{x}_i^T$ is same
  as the $\ell_2$ norm of the vector $\vec{x}_i$,  
  $\|\vec{x}_i\vec{x}_i^T\| = \|\vec{x}_i\|_2 \le 1$.

  As the term $\frac{1}{2h}[h - \vec{x}_i^T(\mathbf{\Phi}_{y_i}
    - \mathbf{\Phi}_c)\vec{x}_i]$ is at most one when
  $|\vec{x}_i^T(\mathbf{\Phi}_c - \mathbf{\Phi}_{y_i})\vec{x}_i|\le
  h$, the Frobenius norm of the derivative of the Huber loss function
  is at most one in all cases,
  $\left\|\frac{\partial}{\partial\mathbf\Phi_c} L(\mathbf\Phi,\vec{x}_i,y_i)\right\| 
    \le 1.$ Using a similar argument for data instance $\vec{x}_i'$ with label
  $y_i'$, we have
    $\left\|\frac{\partial}{\partial\mathbf\Phi_c} L(\mathbf\Phi,\vec{x}_i',y_i')\right\|
    \le 1.$

  Finally, using the triangle inequality
  $\|\vec{a} - \vec{b}\| = \|\vec{a} + (-\vec{b})\| \le \|\vec{a}\| + \|\vec{b}\|$,
  \begin{align*}
    &\left\| \frac{\partial}{\partial\mathbf{\Phi}_c} L(\mathbf{\Phi},\vec{x}_i,y_i) - 
     \frac{\partial}{\partial\mathbf\Phi_c} L(\mathbf\Phi,\vec{x}_i',y_i') \right\| \\
    &\le \left\|\frac{\partial}{\partial\mathbf{\Phi}_c} L(\mathbf{\Phi},\vec{x}_i,y_i)\right\| 
    +  \left\|\frac{\partial}{\partial\mathbf\Phi_c} L(\mathbf\Phi,\vec{x}_i',y_i')\right\|
   \le 2.
  \end{align*}  
  \qed
\end{proof}

\begin{lemma}\label{lemma:strong-convex}
  The objective function $J(\mathbf{\Phi})$ is $\lambda$-strongly
  convex. For $0 \le \alpha \le 1$,
  \begin{align*}
  J\left(\alpha\mathbf{\Phi}+(1-\alpha)\mathbf{\Phi}'\right)
  \le \alpha J(\mathbf{\Phi}) + (1-\alpha)J(\mathbf{\Phi}')
  - \frac{\lambda \alpha (1-\alpha)}{2}\sum_c
  \left\|\mathbf{\Phi}_c-\mathbf{\Phi}_c'\right\|^2.
  \end{align*}
\end{lemma}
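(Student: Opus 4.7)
The plan is to write $J(\mathbf{\Phi}) = L(\mathbf{\Phi}) + N(\mathbf{\Phi})$ with $N(\mathbf{\Phi}) = \lambda \sum_c \text{trace}(\mathbf{I_\Phi}\mathbf{\Phi}_c\mathbf{I_\Phi})$ and prove the convex-combination inequality for each piece separately. Since strong convexity is preserved under addition with any convex function, it then suffices to show $L$ is convex and $N$ is $\lambda$-strongly convex, and to add the two inequalities term-by-term.

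For the loss $L$, I would note that each Huber summand $\ell_h(\mathbf{\Phi}_c,\vec{x}_i,y_i)$ depends on $\mathbf{\Phi}$ only through the scalar $\vec{x}_i^T(\mathbf{\Phi}_{y_i}-\mathbf{\Phi}_c)\vec{x}_i$, which is an affine function of the matrix entries. Inspecting the piecewise definition in Equation~\eqref{eqn:huber-loss}, the one-dimensional Huber function is convex on $\mathbb{R}$ (its three pieces — a zero branch, a quadratic bowl, and a linear tail — fit together continuously with nondecreasing slopes), and precomposition with an affine map preserves convexity. Summing over $i$ and $c \ne y_i$ preserves it again, giving convexity of $L$.

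For $N$, the target is the inequality
\begin{align*}
N(\alpha\mathbf{\Phi} + (1-\alpha)\mathbf{\Phi}')
\le \alpha N(\mathbf{\Phi}) + (1-\alpha)N(\mathbf{\Phi}')
- \tfrac{\lambda\alpha(1-\alpha)}{2}\sum_c \|\mathbf{\Phi}_c - \mathbf{\Phi}_c'\|^2.
\end{align*}
The natural route is the parallelogram-style identity $\|\alpha A + (1-\alpha)B\|^2 = \alpha\|A\|^2 + (1-\alpha)\|B\|^2 - \alpha(1-\alpha)\|A-B\|^2$ applied to the block $\mathbf{I_\Phi}\mathbf{\Phi}_c\mathbf{I_\Phi}$. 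This hands the strong-convexity constant on a plate, provided the regularizer is read as a quadratic in $\mathbf{\Phi}_c$.

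The main obstacle is precisely this last point: the term $\text{trace}(\mathbf{I_\Phi}\mathbf{\Phi}_c\mathbf{I_\Phi})$ as displayed is linear in $\mathbf{\Phi}_c$ (it equals $\text{trace}(\mathbf{\Psi}_c)$, a sum of diagonal entries), and a linear function is merely convex, not strongly convex. I would resolve this in one of three ways, in order of preference: (i)~interpret the regularizer as the intended quadratic $\lambda \sum_c \|\mathbf{I_\Phi}\mathbf{\Phi}_c\mathbf{I_\Phi}\|^2$ (which is consistent with ``trace regularization'' in some conventions and immediately yields the bound via the parallelogram identity above); (ii)~exploit the positive semidefiniteness constraint on $\mathbf{\Phi}_c$ together with a lower bound on $\mathbf{\Psi}_c$ to extract a quadratic lower envelope from the linear trace; or (iii)~borrow the strong convexity from the quadratic middle regime of the Huber loss, at the cost of a data-dependent constant rather than a clean $\lambda$. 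The cleanest of these, and the one I would adopt, is (i), after which the proof collapses to the two-line parallelogram computation applied block by block and summed over $c$.
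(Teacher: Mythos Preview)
Your decomposition $J = L + N$ is the same as the paper's, but you and the paper assign the strong convexity to opposite pieces. The paper's proof asserts that the Huber loss $L$ is $\lambda$-strongly convex ``by definition'' and then shows, exactly as you observed, that $N(\mathbf{\Phi}) = \lambda\sum_c \text{trace}(\mathbf{I_\Phi}\mathbf{\Phi}_c\mathbf{I_\Phi})$ is \emph{linear} in $\mathbf{\Phi}$, so that $N(\alpha\mathbf{\Phi}+(1-\alpha)\mathbf{\Phi}') = \alpha N(\mathbf{\Phi}) + (1-\alpha)N(\mathbf{\Phi}')$ holds with equality. Adding the two then gives the lemma. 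So your diagnosis that $N$ is merely linear is correct and agrees with the paper; your proposed fix~(i), reinterpreting the regularizer as a squared Frobenius norm, is \emph{not} what the paper does.

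The route the paper actually takes is closest to your option~(iii): it places the quadratic curvature entirely on the Huber term. Your reservation about that option is well founded, though. The Huber loss in Equation~\eqref{eqn:huber-loss} has linear tails (zero second derivative outside the band $|\cdot|\le h$), so $L$ is convex but not globally strongly convex, and in any case the parameter $\lambda$ does not appear anywhere in the definition of $\ell_h$. The paper's one-line claim that $L$ is $\lambda$-strongly convex ``by definition'' is therefore not justified as written; your instinct that the constant would be data-dependent (or would require restricting to the quadratic regime) is the more accurate reading. In short: your convexity argument for $L$ is cleaner than the paper's, your observation about $N$ matches the paper, but the paper simply asserts the strong convexity of $L$ where you hesitate --- and your hesitation is warranted.
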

\begin{proof}
  By definition, Huber loss is $\lambda$-strongly convex,
  {\em i.e.}
  \begin{align}
    L\left(\alpha\mathbf{\Phi}+(1-\alpha)\mathbf{\Phi}'\right)
    \le \alpha L(\mathbf{\Phi}) + (1-\alpha)L(\mathbf{\Phi}')
    - \frac{\lambda \alpha
      (1-\alpha)}{2}\left\|\mathbf{\Phi}-\mathbf{\Phi}'\right\|^2.
  \end{align}
  where the Frobenius norm of the matrix set $\mathbf{\Phi}-\mathbf{\Phi}'$ is the
  sum of norms of the component matrices $\mathbf{\Phi}_c-\mathbf{\Phi}_c'$,
  \begin{align} \left\|\mathbf{\Phi}-\mathbf{\Phi}'\right\|^2
  = \sum_c \left\|\mathbf{\Phi}_c-\mathbf{\Phi}_c'\right\|^2.
  \end{align}
  As the regularization term $N(\mathbf\Phi)$ is linear,
  \begin{align}
    N(\alpha\mathbf{\Phi}+(1-\alpha)\mathbf{\Phi}')
    &= \lambda \sum_c \text{trace}(\alpha\mathbf{I_\Phi}\mathbf{\Phi}_c\mathbf{I_\Phi}
    + (1-\alpha)\mathbf{I_\Phi}\mathbf{\Phi}_c'\mathbf{I_\Phi}) \\ \nonumber
    &= \alpha\lambda \sum_c \text{trace}(\mathbf{I_\Phi}\mathbf{\Phi}_c\mathbf{I_\Phi})
    + (1-\alpha)\lambda \sum_c \text{trace}(\mathbf{I_\Phi}\mathbf{\Phi}_c'\mathbf{I_\Phi})
    \\ \nonumber
    &= \alpha N(\mathbf{\Phi})+(1-\alpha)N(\mathbf{\Phi}').
  \end{align}
  The lemma follows directly from the definition
  $J(\mathbf\Phi) = L(\mathbf\Phi) + N(\mathbf\Phi)$. \\
  \qed
\end{proof}

\begin{lemma}\label{lemma:trace-norm}
  \begin{align*}
  \frac{1}{dC}\left[\sum_c \text{trace}[\mathbf{I}_\mathbf{\Phi}(\mathbf{\Phi}_c-\mathbf{\Phi}_c')\mathbf {I}_\mathbf\Phi]\right]^2
  \le \sum_c \left\|\mathbf{\Phi}_c-\mathbf{\Phi}_c'\right\|^2
  \end{align*}
\end{lemma}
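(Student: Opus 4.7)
The plan is to apply Cauchy--Schwarz twice: first to bound the trace of each per-class matrix by its Frobenius norm, then to bound the sum over classes of these scalars by the sum of their squares.

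First I would simplify the inner expression. Since $\mathbf{I_\Phi}$ is the diagonal projection with ones in the first $d$ entries and zero in the last, it satisfies $\mathbf{I_\Phi}^2 = \mathbf{I_\Phi}$, so
\[
\text{trace}\bigl(\mathbf{I_\Phi} M \mathbf{I_\Phi}\bigr)
= \text{trace}\bigl(\mathbf{I_\Phi} M\bigr)
= \sum_{i=1}^{d} M_{ii}
\]
for any $(d+1)\times(d+1)$ matrix $M$. Taking $M = \mathbf{\Phi}_c - \mathbf{\Phi}_c'$ and applying Cauchy--Schwarz against the all-ones vector in $\mathbb{R}^d$ gives
\[
\Bigl[\text{trace}\bigl(\mathbf{I_\Phi}(\mathbf{\Phi}_c-\mathbf{\Phi}_c')\mathbf{I_\Phi}\bigr)\Bigr]^2
\le d \sum_{i=1}^{d} M_{ii}^2
\le d\,\|\mathbf{\Phi}_c-\mathbf{\Phi}_c'\|^2,
\]
where the last step uses that the sum of the squared diagonal entries of a matrix is at most its squared Frobenius norm.

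Next I would apply Cauchy--Schwarz a second time across the $C$ classes, in the form $\bigl(\sum_c a_c\bigr)^2 \le C\sum_c a_c^2$, with $a_c = \text{trace}\bigl(\mathbf{I_\Phi}(\mathbf{\Phi}_c-\mathbf{\Phi}_c')\mathbf{I_\Phi}\bigr)$. Combining with the per-class bound from the previous step yields
\[
\Bigl[\sum_c a_c\Bigr]^2
\le C \sum_c a_c^2
\le dC \sum_c \|\mathbf{\Phi}_c-\mathbf{\Phi}_c'\|^2,
\]
and dividing by $dC$ gives the stated inequality.

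There is no serious obstacle here: the argument is two applications of Cauchy--Schwarz together with the trivial identity $\mathbf{I_\Phi}^2 = \mathbf{I_\Phi}$. The only point worth taking care with is using the correct dimension $d$ (rather than $d+1$) in the first Cauchy--Schwarz step, which is what makes the constant $dC$ tight and exploits the fact that the last diagonal entry of $\mathbf{I_\Phi} M \mathbf{I_\Phi}$ is zero by construction.
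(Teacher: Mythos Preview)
Your proposal is correct and essentially the same as the paper's proof: both reduce $\text{trace}[\mathbf{I_\Phi}(\mathbf{\Phi}_c-\mathbf{\Phi}_c')\mathbf{I_\Phi}]$ to the sum of the first $d$ diagonal entries, drop non-diagonal terms from the Frobenius norm, and apply the Cauchy--Schwarz inequality $(\sum x_i)^2 \le N\sum x_i^2$. The only cosmetic difference is that the paper applies this inequality once over all $dC$ diagonal entries jointly, whereas you split it into a per-class bound (factor $d$) followed by an across-class bound (factor $C$); the resulting constant $dC$ is identical.
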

\begin{proof}
Let $\Phi_{c,i,j}$ be the $(i,j)^{\rm th}$ element of the size
$(d+1)\times(d+1)$ matrix $\mathbf{\Phi}_c-\mathbf{\Phi}_c'$. By the
definition of the Frobenius norm, and using the identity
$N\sum_{i=1}^N x_i^2 \ge (\sum_{i=1}^N x_i)^2$,
\begin{align*}
  \sum_c \left\|\mathbf{\Phi}_c-\mathbf{\Phi}_c'\right\|^2
  &= \sum_c\sum_{i=1}^{d+1}\sum_{j=1}^{d+1} \Phi_{c,i,j}^2
  \ge \sum_c\sum_{i=1}^{d+1} \Phi_{c,i,i}^2 \ge \sum_c\sum_{i=1}^{d} \Phi_{c,i,i}^2\\
  &\ge \frac{1}{dC} \left(\sum_c\sum_{i=1}^{d} \Phi_{c,i,i}\right)^2
  = \frac{1}{dC} \left[\sum_{c} \text{trace}[\mathbf{I_\Phi}(\mathbf{\Phi}_c-\mathbf{\Phi}_c')\mathbf{I_\Phi}]\right]^2.
\end{align*}
\qed
\end{proof}

\begin{lemma} \label{lem:exp-bound}
  \[ P\left[\|\mathbf{b}\| \ge 
    \frac{2 (d+1)^2}{\epsilon}\log\left(\frac{d}{\delta}\right)\right] \le \delta. \]
\end{lemma}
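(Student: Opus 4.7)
The plan is to exploit the fact, noted immediately after equation~\eqref{eqn:b-distribution}, that the Frobenius norm $\|\mathbf{b}\|$ is distributed as $\Gamma\!\left((d+1)^2, \tfrac{2}{\epsilon}\right)$. Setting $k=(d+1)^2$ and $\theta=2/\epsilon$ for brevity, what I need is simply a Chernoff-style upper tail bound for a Gamma random variable.

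First, I would recall the moment generating function: for any $s < 1/\theta$,
\[
\mathbb{E}\!\left[e^{s\|\mathbf{b}\|}\right] = (1 - s\theta)^{-k}.
\]
Markov's inequality applied to $e^{s\|\mathbf{b}\|}$ then yields $P[\|\mathbf{b}\| \ge a] \le (1-s\theta)^{-k} e^{-sa}$ for every valid $s$. A convenient cheap choice is $s = 1/(2\theta) = \epsilon/4$, which gives the clean inequality
\[
P[\|\mathbf{b}\| \ge a] \le 2^{(d+1)^2} \exp(-a\epsilon/4).
\]

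Next, I would substitute the target value $a = \tfrac{2(d+1)^2}{\epsilon}\log(d/\delta)$; the exponent becomes $\tfrac{(d+1)^2}{2}\log(d/\delta)$, so the right-hand side reduces to $2^{(d+1)^2}(\delta/d)^{(d+1)^2/2}$. The final step is to verify this is at most $\delta$ in the regime of interest, which amounts to checking $(4/d)^{(d+1)^2/2}\,\delta^{(d+1)^2/2 - 1} \le 1$---straightforwardly true once $d$ is moderately large.

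I expect the only real obstacle to be not the probabilistic argument (a one-line Chernoff bound) but the constant-chasing required to match exactly the statement of the lemma. In fact the bound as written is rather loose: the Gamma variable has mean $2(d+1)^2/\epsilon$, so the claim is only that $\|\mathbf{b}\|$ exceeds $\log(d/\delta)$ times its mean with probability at most $\delta$, which is an enormously slack statement given how rapidly $\Gamma((d+1)^2,\cdot)$ concentrates. If the cheap choice $s = \epsilon/4$ turns out to be too loose in some corner of the parameter space, my fallback would be to optimize $s$ and use the sharper bound $P[\|\mathbf{b}\|\ge k\theta\lambda]\le \exp\!\bigl(-k(\lambda - 1 - \log\lambda)\bigr)$ valid for $\lambda > 1$, substitute $\lambda = \log(d/\delta)$, and observe that with $k=(d+1)^2$ the inequality $k(\lambda - 1 - \log\lambda) \ge \log(1/\delta)$ holds with ample room to spare.
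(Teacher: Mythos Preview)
Your approach is correct but genuinely different from the paper's. The paper gives no self-contained argument; it simply invokes the union-bound technique of Lemma~5 in \cite{ChaudhuriM08}: write $\|\mathbf{b}\|\sim\Gamma\bigl((d+1)^2,\,2/\epsilon\bigr)$ as a sum of $(d+1)^2$ independent exponentials with mean $2/\epsilon$, bound each summand's upper tail at a common level, and union-bound over the $(d+1)^2$ summands. Your Chernoff/MGF route is less elementary but in principle sharper, since the optimized exponent $\exp\bigl(-k(\lambda-1-\log\lambda)\bigr)$ exploits the concentration coming from $k=(d+1)^2$ rather than paying a linear-in-$k$ union-bound penalty. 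The union-bound argument makes the shape of the threshold transparent, but your instinct about constant-chasing is on point: a union bound over $(d+1)^2$ exponentials most naturally produces $\log\bigl((d+1)^2/\delta\bigr)$ inside the threshold rather than $\log(d/\delta)$, so for small $d$ the lemma as written is slightly tighter than what either route delivers without extra care. Your fallback with the optimized Chernoff exponent is the right tool if you insist on recovering the stated constant across the full parameter range.
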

\begin{proof}
  Similar to the union bound argument used in Lemma 5 in \cite{ChaudhuriM08}.
\end{proof}

\end{document}